\documentclass[11pt]{article}

\usepackage[T1]{fontenc}
\usepackage[margin=1in]{geometry}
\usepackage{times}
\usepackage{amsmath,amssymb,amsthm,booktabs,tabularx,graphicx}
\newtheorem{theorem}{Theorem}[section]
\newtheorem{proposition}[theorem]{Proposition}

\usepackage{float}
\usepackage{algorithm}
\usepackage{algpseudocode}
\usepackage{needspace}
\usepackage{microtype}
\usepackage[round,authoryear]{natbib}
\usepackage[hidelinks]{hyperref}
\hypersetup{hidelinks}
\usepackage{url}
\hypersetup{
  pdftitle={PrT-Flow: Probability-Tractable Flow Matching for Reinforcement Learning},
  pdfauthor={Congzhong Sun and Ning Tan}
}

\title{PrT-Flow: Probability-Tractable Flow Matching for Reinforcement Learning}
\author{Congzhong Sun\thanks{Email: \href{mailto:sunczh3@mail2.sysu.edu.cn}{\texttt{sunczh3@mail2.sysu.edu.cn}}.}
  \quad Ning Tan\thanks{Corresponding author. Email: \href{mailto:tann5@mail.sysu.edu.cn}{\texttt{tann5@mail.sysu.edu.cn}}.}\\
  School of Computer Science and Engineering, Sun Yat-sen University}
\date{}

\begin{document}
\maketitle

\begin{abstract}
Diffusion models and Flow Matching can represent flexible distributions
over robot actions, but many reinforcement learning algorithms also
require the probability of each action. We introduce PrT-Flow
(Probability-Tractable Flow Matching), which computes probability density
while retaining multimodal, multi-step action generation. Its
Jacobian-Tractable Transformer (JTT) makes each sampling step invertible
and enables efficient computation of its Jacobian determinant.
We train these finite steps with Conditional Euler-step Training
(CET), which supervises their displacement over one step during sampling
rather than their instantaneous velocity. We prove that, under certain
assumptions, CET has the same minimizers as fitting the Euler-solver
updates of Flow Matching. PrT-Flow achieves 96\% success on Robomimic Lift after
300 epochs of imitation learning, matching the diffusion policy
reference, and 96\% success on OGBench Cube-Triple after 600k online
steps. These results demonstrate that invertible networks with
tractable probability densities can learn effective robot policies and support
SAC-style actor--critic training.
Our code is available at
\url{https://github.com/Sunlight-zero/PrT-Flow}.
\end{abstract}

\begin{figure}[!t]
 \centering
 \includegraphics[width=\textwidth]{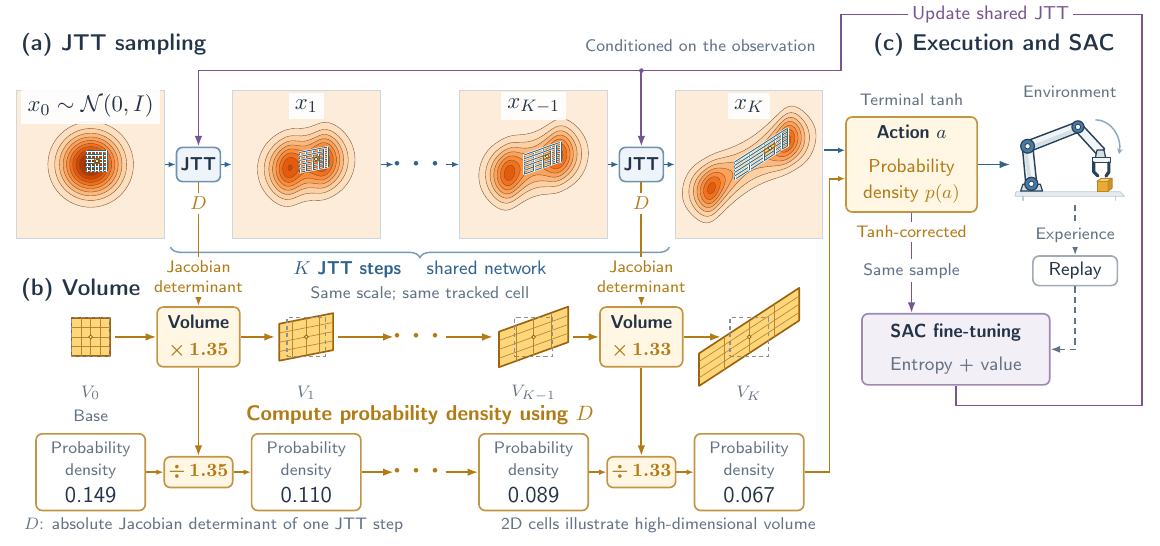}
 \caption{The overview of PrT-Flow. 
(a) Conditioned on an observation, the Jacobian-Tractable Transformer
(JTT) transforms standard Gaussian noise through $K$ invertible
steps of size $h=1/K$.
Each step returns both the updated sample and the Jacobian determinant $D$, without constructing a dense Jacobian matrix.
(b) The deformed grid illustrates
how the probability density of a sample evolves under these steps.
At each step, JTT computes the absolute Jacobian determinant $D$. The local volume is scaled by $D$ and the corresponding density is divided by $D$, allowing us to track the probability density of a sample.
(c) A final tanh transformation maps the generated sample to a bounded action chunk $a$ with its probability density $p(a)$. SAC-style actor–critic training then uses sampled actions, their log-densities, and learned Q-values to update the FM policy.}
 \label{fig:method}
\end{figure}

\section{Introduction}

Learning robot manipulation from demonstrations requires a policy
that maps observations, such as camera images or robot states, to
actions. For a given observation, several motions may
accomplish the same task, yet their average may not correspond to
any valid motion. Therefore, the policy should represent a distribution
over actions. With action chunking, it predicts a sequence of actions
from the current observation and replans from a new observation
after executing a specified number of actions.

Given an observation $c$, a stochastic policy treats an action chunk
$A$ as a random vector following the conditional distribution $\pi(\cdot\mid c)$.
Diffusion Policy learns this distribution from robot demonstrations
using a conditional diffusion model~\citep{chi2023diffusion}.
Flow Matching offers a related approach to probability transport,
learning to move noise toward demonstration actions along a prescribed
probability path~\citep{lipman2023flowmatching,liu2023rectified}.
PointFlowMatch uses this formulation in place of diffusion to learn
robot manipulation policies~\citep{chisari2024pointflowmatch}.
Both approaches use flexible generative models to capture the
multiple valid behaviors that a robot may exhibit.

Imitation learning (IL) and reinforcement learning (RL) provide
complementary ways to learn robot policies: IL trains a policy to
reproduce demonstrations, while RL uses task rewards to improve
behavior through interaction. In the actor--critic framework widely
used for continuous control, the actor represents the policy and
the critic estimates state or action values. Many RL algorithms
also require policy probabilities. Soft Actor-Critic (SAC) uses
the log probability of a sampled action in its entropy
objective~\citep{haarnoja2018sac}, while Proximal Policy Optimization
(PPO) uses the probability ratio for an action under the updated and previous
policies~\citep{schulman2017ppo}. Therefore, a generative policy must
provide both action samples and their probabilities to support
these algorithms.

Flow Matching provides a sampling procedure, but computing the
probability density of its actions requires additional work.
Continuous normalizing flows track density changes by integrating
the divergence of the velocity field~\citep{chen2018neuralode}.
However, a robot policy generates actions through finitely many
numerical steps, whose endpoint distribution may differ from
that of the continuous flow. We seek to compute the density of
each action chunk under this finite sampling procedure as the
chunk is generated.

We introduce \emph{PrT-Flow} (Probability-Tractable Flow Matching;
Figure~\ref{fig:method}), which parameterizes each sampling step
with an invertible network whose Jacobian determinant is analytically
tractable. The
\emph{Jacobian-Tractable Transformer} (JTT) combines causal
conditioning with monotone scalar transformations. Each forward
pass returns an updated sample and the corresponding determinant,
allowing the policy to compute the probability of denoised actions
using the change-of-variables formula. We train these steps with
\emph{Conditional Euler-step Training} (CET), which supervises their
displacement over one sampling step. Each training example supervises
a single step along an interpolation path between noise and a demonstration endpoint; at deployment, the policy composes
the steps to generate an action chunk. PrT-Flow thus combines the
computable probabilities of normalizing flows with local
regression along Flow Matching paths, supporting multimodal,
multi-step action generation while providing the action log
probabilities used by SAC.

The causal and invertibility constraints in JTT may affect its
ability to represent effective behavior. We therefore evaluate
PrT-Flow in both imitation learning and offline-to-online reinforcement
learning on several benchmarks.
PrT-Flow achieves high task success in these settings.

\needspace{4\baselineskip}
We summarize our contributions as follows.
\begin{enumerate}
\item We propose PrT-Flow and its JTT architecture, which compute the probability
density of an action during multi-step generation using tractable
per-step Jacobian determinants.
\item We introduce CET, a training objective whose minimizers coincide with
those of marginal Flow Matching over the same family of finite
maps. We also show its equivalence to regression against Euler-solver
updates of the marginal velocity field.
\item Our experiments on Lift and Cube-Triple show that the
invertible architecture supports high success rates in both
imitation learning and SAC-style online control.
\end{enumerate}
Our code is available at
\url{https://github.com/Sunlight-zero/PrT-Flow}.

\section{Related Work}
\label{sec:related-work}

\subsection{Diffusion and Flow Matching Policies}
As discussed in the Introduction, Diffusion Policy generates action sequences by iterative
denoising~\citep{chi2023diffusion}. Flow Matching and rectified flow
learn velocity fields that transport noise to data~\citep{lipman2023flowmatching,liu2023rectified},
and PointFlowMatch applies conditional Flow Matching to robot
manipulation~\citep{chisari2024pointflowmatch}. PrT-Flow retains
multi-step generation to preserve the multimodal ability of Diffusion
Policy and Flow Matching policies while providing the action probabilities
needed for RL.

\subsection{Reinforcement Learning for Generative Policies}
DPPO fine-tunes diffusion policies with PPO over an augmented
denoising decision process~\citep{ren2025dppo}. Its PPO ratios use
probabilities of individual denoising transitions, rather than the marginal
probability of the final action chunk. FQL distills a Flow Matching
behavior policy into a separate one-step actor optimized with
Q-values~\citep{park2025fql}. FPO constructs proxy probability ratios
from conditional Flow Matching losses~\citep{mcallister2025fpo}, while
RLDT improves a pretrained flow policy through transport
updates~\citep{lei2026rldt}. These approaches enable policy improvement
without directly evaluating the exact probability of the final action
samples produced by the multi-step generator. PrT-Flow instead
provides the final action probability directly for policy optimization.

\subsection{Density Evaluation for Policy Optimization}
Normalizing-flow policies have been used in SAC~\citep{ward2019flowactor,mazoure2020sacnf}
and dexterous fine-tuning with action-chunked critics~\citep{yang2026sernf}.
NFPO uses stabilized Real NVP couplings to provide action probabilities
for PPO ratios~\citep{shi2026nfpo}; PrT-Flow instead learns
multi-step maps through local Flow Matching regression
with CET.

SAC-Flow uses noise-augmented sampling and a joint probability density
over intermediate states as an entropy surrogate for offline-to-online
SAC~\citep{yu2025sacflow}. PrT-Flow obtains the final action density
directly by applying deterministic, invertible Euler-solver steps after
sampling the initial Gaussian noise. We mainly compare the performance
of our methods to SAC-Flow in Section~\ref{sec:experiments}.

GenPO++ uses a reversible solver with paired consecutive states,
whose fixed Jacobian determinants cancel in PPO-style joint probability
ratios~\citep{hu2026genpopp}. PrT-Flow computes the final action
probability density without an auxiliary solver-history variable.

FALCON learns interval transport through average-velocity regression,
uses cycle consistency to encourage invertibility, and evaluates
discrete-map Jacobian determinants for Boltzmann
sampling~\citep{rehman2025falcon}. CET instead directly supervises
finite maps against Euler targets at the actual sampling step size,
while JTT guarantees invertibility by construction and computes
determinants from analytic scalar derivatives.

Related architectures include coupling and spline flows~\citep{dinh2017density,durkan2019neural},
causal Transformers~\citep{zhai2025tarflow}, and learned orthogonal
projections~\citep{dai2021sinf}. JTT combines learned orthogonal
coordinates with causal monotone splines and step-size conditioning.
RegFlow regresses against precomputed endpoint
transports~\citep{rehman2026regflow}, whereas CET supervises local
finite steps. Continuous normalizing flows and maximum-entropy
Flow Matching RL track probability density through velocity-field
divergence~\citep{chen2018neuralode,zhang2025maxentfm}, whereas
PrT-Flow accumulates the log-determinants of its actual sampling maps.

\section{Preliminaries}
\label{sec:preliminaries}
\subsection{Robot Policies and Action Chunks}
\label{sec:policy-coordinates}
At environment time $\tau$, the policy receives a window of
observations $(o_{\tau-T_o+1},\ldots,o_\tau)$, where $T_o$ is the
observation horizon. We denote this observation condition by
$c_\tau$, or simply $c$ when the context is clear.
Each action is a vector $\boldsymbol a_\tau\in\mathbb R^{D_a}$.
An action-chunk policy predicts $T_p$ consecutive actions, executes
the first $T_a$, and then replans from the latest observations~\citep{chi2023diffusion}. We flatten
the predicted chunk into a single vector:
\[
 a=\operatorname{vec}(\boldsymbol a_\tau,\ldots,
                      \boldsymbol a_{\tau+T_p-1})\in\mathbb R^d,
 \qquad d=T_pD_a.
\]
We denote the conditional probability density of the predicted action
chunk by $\pi_\theta(a\mid c)$, where $\theta$ denotes the policy parameters.

The demonstration dataset $\mathcal D$ pairs each observation condition
$c$ with a recorded action chunk $a_D$ in the environment's bounded
control coordinates $[-1,1]$. The generative model operates in
$\mathbb R^d$ and maps its endpoint $u$ to an action through
$a=\tanh u$. The corresponding demonstration endpoint is
$u_D=\operatorname{atanh}\tilde a_D$, where $\tilde a_D$ is the
demonstration action after the boundary preprocessing described in
Appendix~\ref{app:protocol}. We use $t\in[0,1]$ for time within the
generative process, distinct from environment time $\tau$.

\subsection{Conditional Flow Matching}
Given a training pair $(c,u_D)$, draw Gaussian noise
$z\sim\mathcal N(0,I_d)$ and a time $t\sim\mathrm{Unif}[0,1]$
independently. Let $\phi$ denote the standard Gaussian probability density.
The straight path from the noise sample to the demonstration endpoint
has position and velocity
\begin{equation}
 X_t=(1-t)z+tu_D,\qquad V=u_D-z.
 \label{eq:cfm-path}
\end{equation}
Conditional Flow Matching fits these path velocities with a network
$v_\theta:\mathbb R^d\times[0,1]\times\mathcal C\to\mathbb R^d$,
where $\mathcal C$ is the observation condition space, by minimizing
\begin{equation}
 \mathcal L_{\mathrm{CFM}}=
 \mathbb E_{(c,u_D),z,t}\frac{1}{d}
 \|v_\theta(X_t,t,c)-V\|_2^2,
 \label{eq:cfm-standard}
\end{equation}
where the expectation over $(c,u_D)$ follows the demonstration distribution.
With an MSE loss, the network learns to predict the average target velocity
for a given state, time, and observation.
We use the straight-path formulation of
\citet{lipman2023flowmatching,liu2023rectified}; for a broader
treatment of the foundations, conditional objectives, and
implementation, see \emph{Flow Matching Guide and Code} by
\citet{lipman2024guide}.

\subsection{Sampling by Numerical Integration}
\label{sec:numerical-sampling}
To generate an action for a fixed observation, Flow Matching draws
a Gaussian sample $z$ and solves an ODE from the learned velocity field:
\begin{equation}
 \frac{d x_t}{dt}=v_\theta(x_t,t,c),\qquad x_0=z.
 \label{eq:ode}
\end{equation}
With $K$ equal steps of size $h=1/K$ and grid points $t_k=kh$,
we can approximately solve this ODE via an Euler solver:
\begin{equation}
 x_{k+1}=x_k+h\,v_\theta(x_k,t_k,c),\qquad k=0,\ldots,K-1.
 \label{eq:euler}
\end{equation}
Here $x_k$ is the numerical state at $t_k$. After $K$ steps, the
endpoint $u=x_K$ is mapped to the action chunk. Throughout the paper,
$X_t$ denotes a point constructed directly on a training path,
whereas $x_k$ denotes a state reached by the sampler. The distribution
of generated actions is determined by this finite sequence of
numerical updates.

\subsection{Soft Actor-Critic}
Soft Actor-Critic (SAC) is an off-policy algorithm that learns a policy
and an action-value function $Q_\psi(c,a)$ from transitions stored in a
replay buffer $\mathcal B$~\citep{haarnoja2018sac}. Its policy
objective combines value maximization with entropy regularization:
\begin{equation}
 \mathcal L_{\mathrm{SAC}}=
 \mathbb E_{c\sim\mathcal B,\,a\sim\pi_\theta(\cdot\mid c)}
 [\alpha\log\pi_\theta(a\mid c)-Q_\psi(c,a)],
 \label{eq:sac-standard}
\end{equation}
where $\psi$ denotes the critic parameters and $\alpha>0$ sets the
entropy weight. The policy must provide both an action sample and its
log probability, which also enters the critic's soft Bellman target.
Section~\ref{sec:sac} gives the action-chunk objectives used in our
experiments.

\section{PrT-Flow}
\label{sec:method}
PrT-Flow generates action chunks by repeatedly transforming noisy actions and tracking the change in probability density at each step.
The Jacobian-Tractable Transformer (JTT) implements these Euler-solver updates,
returning the updated state together with the update's Jacobian determinant. Both
computations are differentiable with respect to the network parameters,
allowing SAC to optimize the policy through sampled actions and their
log probabilities. Conditional Euler-step Training (CET) learns the
Euler-solver updates by supervising their displacements along demonstration paths.
We first explain how to compute probability densities during sampling,
then describe the network architecture and training procedure.

\subsection{Motivation}
\label{sec:finite-map}
Consider the Euler update from Section~\ref{sec:numerical-sampling},
$x_{k+1}=x_k+h\,v_\theta(x_k,t_k,c)$. Viewed as a transformation of
the state $x_k$, it defines $F_h(x,t,c)=x+h\,v_\theta(x,t,c)$, which we
call an \emph{Euler-step}. The sampler starts from Gaussian noise and
applies $K$ such transformations. If, for fixed $h,t,c$, each
$F_h(\cdot,t,c)$ is a piecewise continuously differentiable bijection
with a nonzero Jacobian determinant almost everywhere, the probability density
after one step follows from the change-of-variables formula:
\begin{equation}
 p_{k+1}(x_{k+1}\mid c)
 =\frac{p_k(x_k\mid c)}
 {|\det J_{F_h}(x_k,t_k,c)|},
 \qquad x_{k+1}=F_h(x_k,t_k,c).
 \label{eq:change-variables}
\end{equation}
Here $p_k$ is the probability density before the transformation, and $J_{F_h}$ is the
Jacobian of $F_h(\cdot,t,c)$. Its absolute determinant measures the
local change in volume: expansion lowers the density, while contraction
raises it. Taking logarithms makes these corrections additive across
sampling steps.

For an arbitrary velocity network $v_\theta$, differentiating the Euler-step
with respect to $x$, while holding $h,t,c$ fixed, gives
\begin{equation}
 J_{F_h}(x,t,c)=I+hJ_{v_\theta}(x,t,c).
 \label{eq:euler-jacobian}
\end{equation}
Each probability density correction therefore requires
$\det(I+hJ_{v_\theta})$, which generally entails additional Jacobian
and determinant calculations. Moreover, a finite Euler step need not
be globally invertible. We address both issues by parameterizing
$F_{\theta,h}$ directly as an invertible network with an analytic
determinant. Trained on Euler-step targets along Flow Matching paths,
this transformation serves as a \emph{learned Euler-step}.
Figure~\ref{fig:method} illustrates how it generates samples and tracks
their densities.

For step size $h=1/K$, our sampler is
\begin{equation}
 x_0=z\sim\mathcal N(0,I_d),\quad
 x_{k+1}=F_{\theta,h}(x_k,t_k,c),\quad
 t_k=kh,\quad u=x_K,\quad a=\tanh u.
 \label{eq:chain}
\end{equation}
The final transformation, $a=\tanh u$, maps the generated endpoint
into the bounded action space. Writing
$\ell_k=\log|\det J_{F_{\theta,h}}(x_k,t_k,c)|$, we obtain the sampled
action chunk's log probability density:
\begin{equation}
 \log\pi_\theta(a\mid c)=
 \log\phi(z)-\sum_{k=0}^{K-1}\ell_k
 -\sum_{j=1}^d\log(1-a_j^2).
 \label{eq:density}
\end{equation}
The three terms account for the Gaussian probability density, the
determinants of the learned updates, and the final tanh transformation.
Under the conditions in
Section~\ref{sec:requirements}, Eq.~\eqref{eq:density} gives the
log probability density of the sampler in Eq.~\eqref{eq:chain} almost
everywhere on $(-1,1)^d$. Algorithm~\ref{alg:forward-sample} computes it
alongside the sample by subtracting the log-determinant at each step.

\begin{table}[H]
\centering
\begin{minipage}[t]{0.96\textwidth}
\vspace{0pt}
\refstepcounter{algorithm}
\label{alg:forward-sample}
\begin{tabular}{@{}p{\linewidth}@{}}
\toprule
\textbf{Algorithm~\thealgorithm} Forward sampling and action probability density\\
\midrule
\begin{minipage}[t]{\linewidth}
\vspace{0pt}
\small
\begin{algorithmic}[1]
\Require observation condition $c$, step count $K$, policy parameters $\theta$
\Ensure normalized action chunk $a$ and $\log\pi_\theta(a\mid c)$
\State $h\gets 1/K$
\State $x_0\sim\mathcal N(0,I_d)$
\State $\ell\gets\log\phi(x_0)$
\For{$k=0,\ldots,K-1$}
  \State $t_k\gets kh$
  \State Apply the composed JTT update to obtain $x_{k+1}=F_{\theta,h}(x_k,t_k,c)$ and its log-determinant $\ell_k$
  \State $\ell\gets\ell-\ell_k$
\EndFor
\State $a\gets\tanh x_K$
\State $\ell\gets\ell-\sum_{j=1}^{d}\log(1-a_j^2)$
\State \Return $a,\ \log\pi_\theta(a\mid c)\gets\ell$
\end{algorithmic}
\end{minipage}\\
\bottomrule
\end{tabular}
\end{minipage}
\end{table}

\subsection{Design Requirements for the Euler-step Network}
\label{sec:requirements}
Section~\ref{sec:finite-map} derives the action probability density
from the determinant of each Euler-step. To compute these determinants
while learning from Flow Matching paths, we impose four requirements
on the Euler-step network.

\begin{enumerate}
\item \textbf{Identity at zero step size and a local velocity.}
The step size $h$ specifies the duration of each transport step.
A zero-length step must therefore leave the state unchanged:
$F_{\theta,0}(x,t,c)=x$. Without this condition, the transformation
could produce a finite displacement even as the time interval tends
to zero. Here the local velocity is induced by the directly
parameterized finite map family. Wherever the derivative with respect
to $h$ exists at zero, the identity condition gives the expansion
\begin{equation}
 v_\theta(x,t,c)=
 \left.\partial_hF_{\theta,h}(x,t,c)\right|_{h=0},\qquad
 F_{\theta,h}(x,t,c)=x+h\,v_\theta(x,t,c)+o(h).
 \label{eq:generator}
\end{equation}
Thus, the displacement divided by $h$ approaches $v_\theta$ as
$h\to0$, connecting the learned finite update to a velocity field
in the zero-step limit.

\item \textbf{Global invertibility and spatial regularity.}
To apply the change-of-variables formula, we require the Euler update,
for each fixed $(t,c,h)$, to be a
piecewise continuously differentiable bijection of $\mathbb R^d$ with
a nonzero spatial Jacobian determinant almost everywhere. Every
output then has a unique preimage, allowing us to propagate its
probability density through the full sequence of updates.

\item \textbf{Tractable determinants during sampling.}
An invertible network does not necessarily provide a tractable
determinant. For a typical neural network, computing the Jacobian determinant
requires additional calculations beyond the forward pass. Our network
must therefore compute the Jacobian determinant efficiently during the forward pass,
without constructing the full Jacobian matrix,
so that $\log|\det J_{F_{\theta,h}}|$ can be accumulated to obtain
the sample's probability density.

\item \textbf{Numerically well-behaved transformations.}
A transformation can be analytically invertible yet sharply compress
or stretch particular directions, making repeated updates and
probability density gradients sensitive to numerical perturbations.
The architecture and implementation should limit such directional
distortions and allow stable evaluation of log-determinants and their
gradients. This is a numerical design goal.
\end{enumerate}

We will describe how JTT addresses these requirements in the next section.

\subsection{Network Architecture}
\label{sec:map}
One JTT feedforward computation consists of $M$ sequential \emph{map heads}. Each head
combines a learned orthogonal coordinate frame, a Transformer
conditioner, and monotone scalar splines. A spline is a one-dimensional
function made of connected curve segments; we use it to transform each
scalar coordinate monotonically. Figure~\ref{fig:network} shows the complete
feedforward calculation (A), the operations within a head (B), and the determinant
calculation (C).

A head first rotates the state $x\in\mathbb R^d$ into its learned
frame, $y=Qx$, then embeds the right-shifted scalar sequence
$(0,y_1,\ldots,y_{d-1})$ as $d$ tokens. Together with time and
observation information, these tokens enter the Transformer, which
predicts spline parameters for each coordinate. The splines transform
the unshifted values $y_i$, and $Q^\top$ rotates the result back to the
original frame. A
\emph{map head} thus denotes a complete invertible transformation,
distinct from the attention heads inside its Transformer.

The Transformer predicts parameters, while the splines perform the
state transformation. The conditioner can therefore use nonlinear
representations of the state and observations, with invertibility
guaranteed by the transformation it parameterizes. Causality specifies
the dependency order among scalar coordinates of the generative state;
all commands in an action chunk share the current observation condition
$c$.

\subsubsection{Causal Scalar Transformations}
For coordinate $i$, the Transformer predicts parameters
$\eta_{\theta,i}(y_{<i},t,c)$ using only the preceding coordinates
$y_{<i}=(y_1,\ldots,y_{i-1})$, time, and observations. The resulting
scalar transformation is
\begin{equation}
 S_{\theta,h}(y)_i
 =s_{\theta,h}\!\left(y_i;
       \eta_{\theta,i}(y_{<i},t,c)\right),\qquad i=1,\ldots,d.
 \label{eq:causal-map}
\end{equation}
For example, in three dimensions, suppressing the fixed conditions
$(t,c,h)$ gives
\[
 \begin{aligned}
  w_1&=s_1(y_1;\eta_1),\\
  w_2&=s_2(y_2;\eta_2(y_1)),\\
  w_3&=s_3(y_3;\eta_3(y_1,y_2)),
 \end{aligned}
 \qquad
 J_S(y)=
 \begin{pmatrix}
  \partial_{y_1}w_1&0&0\\
  \partial_{y_1}w_2&\partial_{y_2}w_2&0\\
  \partial_{y_1}w_3&\partial_{y_2}w_3&\partial_{y_3}w_3
 \end{pmatrix}.
\]
The Jacobian is lower triangular, so its determinant equals the product
of the diagonal scalar derivatives, evaluated with the prefix held
fixed. Since $\eta_{\theta,i}$ does not depend on $y_i$, these entries
require only the derivative of each scalar transformation with respect
to its own input, without differentiating through the conditioner.

Right-shifted tokens and a causal self-attention mask enforce this
prefix dependence: token $i$ can attend to tokens $1$ through $i$,
whose scalar inputs extend only to $y_{i-1}$. Cross-attention supplies
time and observation information. Since all input coordinates are
available during forward sampling, one Transformer pass predicts the
parameters for all coordinates in parallel.

We ensure a nonzero determinant by choosing scalar transformations
with positive derivatives. Each coordinate uses a monotone
rational-quadratic spline (RQS)~\citep{durkan2019neural}, which divides
a fixed interval into bins and maps each input bin to an output bin
with a rational-quadratic curve. The predicted bin widths, bin heights,
and knot slopes control how the spline nonlinearly expands or contracts
different parts of the coordinate range. Positive widths, heights, and slopes
make the transformation strictly increasing within the interval;
identity tails extend it to a bijection of $\mathbb R$. Every scalar
transformation is therefore bijective at a fixed prefix, as is the
full causal transformation. Analytic RQS derivatives give the diagonal
entries needed to evaluate its determinant.

\subsubsection{Step-Size Parameterization and Orthogonal Heads}
To obtain the identity at $h=0$, we parameterize the spline through
deviations from its identity settings, controlled by the step size.
We convert the conditioner's raw outputs $b$ into bounded parameter
rates and scale them by $h$:
\begin{equation}
 r(b)=R\tanh(b/R),\qquad \xi_h=h\,r(b)
 \label{eq:parameter-rate}
\end{equation}
Here $R>0$ controls the rate magnitude, and $\xi_h$ gives the parameter
offsets for the current step. We then convert these offsets into valid bin
widths, bin heights, and knot slopes. Thus $h$ scales the parameter
rates before this conversion; it does not multiply the final widths,
heights, or slopes.

Zero offsets produce an identity spline: the input and output bin
boundaries coincide, and every knot slope equals one. Hence $h=0$
ensures $\xi_h=0$ and $S_{\theta,0}=\mathrm{id}$ for any network
output. For $h>0$, the predicted offsets change the spline's shape.
This parameterization embeds the identity condition from
Section~\ref{sec:requirements} directly in the architecture.
Appendix~\ref{app:causal-proof} gives the RQS parameterization and
its positivity and identity conditions.

Within one causal head, early coordinates cannot depend on later
ones. We therefore compose heads in different learned coordinate
frames, so that subsequent heads act on different linear combinations
of the preceding outputs. Each head takes the form
\begin{equation}
 y=Qx,\qquad H_{\theta,h}(x)=Q^\top S_{\theta,h}(Qx),
 \qquad Q^\top Q=I.
 \label{eq:head}
\end{equation}
Time and observations are held fixed in this expression. Orthogonal
frames preserve lengths, contribute no additional determinant factor,
and can be inverted by taking the transpose. Composing $M$ heads gives
one sampling update:
\[
 F_{\theta,h}=H^{(M)}_{\theta,h}\circ\cdots\circ H^{(1)}_{\theta,h}.
\]
Each head has its own coordinate frame and conditioner. Since the
orthogonal transformations introduce no probability density correction,
the scalar log-derivatives suffice to track the density through the
composition. Every head, and therefore the complete update, becomes
the identity at $h=0$. The same update parameters are used at all $K$
sampling steps, with $t_k$ supplied to each conditioner.

The resulting JTT update is invertible, and its determinant depends
only on scalar derivatives, as formalized below.

\begin{proposition}[Bijective feedforward transformations with tractable determinants]
\label{prop:causal}
For fixed $(t,c,h)$, suppose every scalar transformation in
Eq.~\eqref{eq:causal-map} is a bijection $\mathbb R\to\mathbb R$ for
every fixed prefix. Then each head in Eq.~\eqref{eq:head} is bijective.
At differentiable points with positive scalar derivatives,
\begin{equation}
 \log|\det J_{H_{\theta,h}}(x)|
 =\sum_{i=1}^d
 \log\frac{\partial s_{\theta,h}(y_i;\eta_{\theta,i})}{\partial y_i}.
 \label{eq:head-det}
\end{equation}
Composing heads and sampling updates adds these log-determinants at their
respective input states.
\end{proposition}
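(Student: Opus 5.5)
The plan is to prove bijectivity of the causal map $S_{\theta,h}$ by explicit triangular inversion. The head's bijectivity then follows from conjugation by the orthogonal frame $Q$. For the determinant, I will use the lower-triangular Jacobian and the multiplicativity of determinants, and finish with the chain rule for compositions. Throughout, $(t,c,h)$ are fixed, so $\eta_{\theta,i}$ is a function of the prefix $y_{<i}$ only.

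\emph{Step 1 (bijectivity of $S_{\theta,h}$).} Given any $w\in\mathbb R^d$, I construct the preimage recursively. Because $\eta_{\theta,1}$ depends on no coordinates, $y_1$ is the unique solution of $s_{\theta,h}(y_1;\eta_{\theta,1})=w_1$, which exists since this scalar map is a bijection. Assume $y_1,\ldots,y_{i-1}$ have already been determined uniquely. Then $\eta_{\theta,i}(y_{<i},t,c)$ is a fixed parameter vector, and the scalar bijection hypothesis gives a unique $y_i$ with $s_{\theta,h}(y_i;\eta_{\theta,i})=w_i$. Induction on $i$ yields existence and uniqueness of $y$ with $S_{\theta,h}(y)=w$. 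Uniqueness is the one point needing care: any other preimage $y'$ must agree with $y$ on the first coordinate, hence on the parameters for the second coordinate, and so on. Then $H_{\theta,h}=Q^\top\circ S_{\theta,h}\circ Q$ is a composition of bijections, since $Q$ is invertible with inverse $Q^\top$. So each head is bijective, and by the same argument so is any composition of heads and sampling updates.

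\emph{Step 2 (determinant of one head).} At a point $y=Qx$ where $S_{\theta,h}$ is differentiable, $\partial w_i/\partial y_j=0$ for $j>i$, because $w_i$ depends only on $y_{\le i}$. Hence $J_S(y)$ is lower triangular. Its diagonal entry is $\partial w_i/\partial y_i=\partial s_{\theta,h}(y_i;\eta_{\theta,i})/\partial y_i$ taken with the prefix held fixed. This is legitimate precisely because $\eta_{\theta,i}$ does not depend on $y_i$, so no conditioner derivative enters the diagonal. Thus $\det J_S(y)=\prod_i \partial_{y_i}s_{\theta,h}(y_i;\eta_{\theta,i})$.

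The chain rule gives $J_H(x)=Q^\top J_S(Qx)Q$, so $\det J_H(x)=\det(Q)^2\det J_S(Qx)=\det J_S(Qx)$, using $\det(Q)^2=\det(Q^\top Q)=1$. With all diagonal derivatives positive, the determinant is positive. Taking logarithms gives Eq.~\eqref{eq:head-det}.

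\emph{Step 3 (composition).} For $F_{\theta,h}=H^{(M)}\circ\cdots\circ H^{(1)}$, the chain rule gives $J_F(x)=\prod_m J_{H^{(m)}}(x^{(m-1)})$, where $x^{(0)}=x$ and $x^{(m)}=H^{(m)}(x^{(m-1)})$. Multiplicativity of the determinant then makes the log-determinants add, each evaluated at its own input state. The same argument applies across the $K$ sampling updates in Eq.~\eqref{eq:chain}.

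The only subtle step is differentiability. The statement is asserted only at points where all factors are differentiable, so the chain rule holds there. I therefore expect no real obstacle beyond stating carefully that the diagonal derivative is the partial derivative of $s$ in its first argument with $\eta$ frozen.
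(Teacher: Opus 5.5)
Your proposal is correct and follows the paper's own proof essentially step for step. It uses triangular inversion by induction on the prefix for bijectivity, a lower-triangular Jacobian whose diagonal entries are the scalar derivatives with the conditioner frozen, the conjugation identity $J_H(x)=Q^\top J_S(Qx)Q$ with orthogonality removing the frame, and additivity of log-determinants under composition via the chain rule.
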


The proposition is proved in Appendix~\ref{app:theory}.

\begin{figure}[t]
 \centering
 \includegraphics[width=\textwidth]{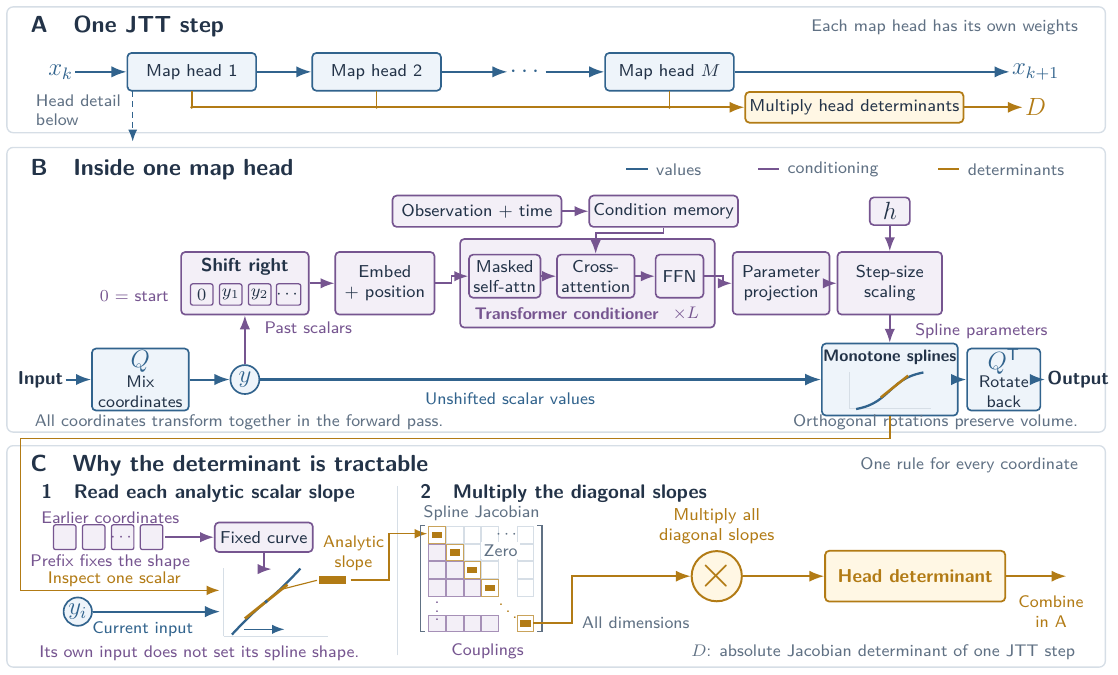}
 \caption{The architecture of Jacobian-Tractable Transformer (JTT).
 \textbf{(A)} A complete update applies $M$ map heads in sequence and
 returns the next noisy sample $x_{k+1}$ and the absolute Jacobian determinant
 $D$.
 \textbf{(B)} A causal Transformer with $L$ decoder blocks uses shifted
 scalar tokens to predict monotone spline parameters. Cross-attention
 provides time and observation information. The projected outputs are
 converted to bounded parameter rates and multiplied by $h$ to obtain
 offsets from the identity settings. These offsets determine the spline
 parameters. Between the two orthogonal rotations, the splines transform
 the unshifted scalar values.
 \textbf{(C)} For a fixed coordinate prefix, time, and observation,
 $y_i$ determines where its spline is evaluated, but not the spline's
 shape. The positive analytic derivatives form the diagonal of the
 lower-triangular spline Jacobian, and their product gives the head
 determinant without evaluating off-diagonal entries. The
 orthogonal rotations preserve its magnitude. At step $k$, summing
 the head log-determinants at their respective inputs gives
 $\ell_k=\log D$.}
 \label{fig:network}
\end{figure}

\subsection{Training}
\label{sec:training}
\subsubsection{Conditional Euler-step Training}
\label{sec:cet}
Standard Flow Matching trains a velocity field along a continuous-time
path, whereas our policy samples through a discrete sequence of finite
maps. The identity condition $F_{\theta,0}(x,t,c)=x$ defines a local velocity,
$\left.\partial_hF_{\theta,h}(x,t,c)\right|_{h=0}$, wherever the
derivative exists (Section~\ref{sec:requirements}). This derivative
describes only the map family's first-order behavior as $h\to0$.

At sampling time, the policy applies the finite map at a fixed positive
step size $h=1/K$. Two map families with the same derivative at $h=0$
can therefore produce different displacements at the sampling step
size: equal zero-step velocities do not imply equal finite updates.

CET therefore supervises the finite displacement at the sampling step
size. Dividing this displacement by $h$ gives a prediction of the
Flow Matching path velocity that requires only one forward map
evaluation. No differentiation with respect to $h$ is needed, and
parameter gradients follow by ordinary backpropagation.
Appendix~\ref{app:cet-proof} explains how regression with this finite
quotient differs from regression with the zero-step generator.

For each training example, we sample a demonstration action chunk and apply
the inverse-tanh transformation described in Section~\ref{sec:policy-coordinates}
to obtain the target $u_D$. We also sample Gaussian noise $z$ and a time
$t_k$ from the sampling grid.
Appendix~\ref{app:protocol} specifies how boundary values are handled.
Interpolating between $z$ and $u_D$ gives the input $X_{t_k}$ on a
straight path with velocity $u_D-z$. Applying the update network at
this state and dividing its displacement by $h$ yields the predicted
path velocity:
\begin{equation}
 X_t=(1-t)z+tu_D,\qquad V=u_D-z,\qquad
 q_{\theta,h}(x,t,c)=\frac{F_{\theta,h}(x,t,c)-x}{h}.
 \label{eq:quotient}
\end{equation}
The CET loss is similar to the Conditional Flow Matching loss
(Eq.~\eqref{eq:cfm-standard}). Using the sampling step size $h=1/K$ and a uniformly drawn grid
index $k\in\{0,\ldots,K-1\}$, CET minimizes
\begin{equation}
 \mathcal L_{\mathrm{CET},h}
 =\mathbb E_{c,u_D,z,k}\frac{1}{d}
   \|q_{\theta,h}(X_{t_k},t_k,c)-V\|_2^2;
 \label{eq:fm}
\end{equation}
for each endpoint pair, $X_{t_k}+hV=X_{t_{k+1}}$, so this loss trains
the map to advance the path sample toward its next grid position.
Both $X_{t_k}$ and the target velocity $V$ are constructed directly
from the endpoints, requiring only one local map evaluation per
training example. At deployment, the policy starts from noise and
applies all $K$ updates in sequence, using each generated state as
the input to the next update.

The training and sampling procedures thus use the same positive step
size and time grid. Algorithm~\ref{alg:cet-training} describes
how to train our model with demonstrations.

\begin{table}[htbp]
\centering
\begin{minipage}[t]{0.96\textwidth}
\vspace{0pt}
\refstepcounter{algorithm}
\label{alg:cet-training}
\begin{tabular}{@{}p{\linewidth}@{}}
\toprule
\textbf{Algorithm~\thealgorithm} Conditional Euler-step Training\\
\midrule
\begin{minipage}[t]{\linewidth}
\vspace{0pt}
\small
\begin{algorithmic}[1]
\Require demonstration windows after normalization and inverse-tanh transformation $\mathcal D_u=\{(c,u_D)\}$,
  initialized $\theta$, step count $K$, batch size $B$, update budget $U$
\Ensure trained policy parameters $\theta$
\State $h\gets 1/K$
\For{$r=1,\ldots,U$}
  \State Sample a minibatch $\{(c_i,u_{D,i})\}_{i=1}^{B}$ from $\mathcal D_u$
  \For{$i=1,\ldots,B$}
    \State Draw $z_i\sim\mathcal N(0,I_d)$ and $k_i\sim\mathrm{Unif}\{0,\ldots,K-1\}$
    \State $t_i\gets k_i h$; $X_i\gets(1-t_i)z_i+t_i u_{D,i}$; $V_i\gets u_{D,i}-z_i$
    \State $q_i\gets\bigl(F_{\theta,h}(X_i,t_i,c_i)-X_i\bigr)/h$
  \EndFor
  \State $\mathcal L\gets\frac{1}{Bd}\sum_{i=1}^{B}\|q_i-V_i\|_2^2$
  \State Take one optimizer step on $\theta$ using $\nabla_\theta\mathcal L$
\EndFor
\State \Return $\theta$
\end{algorithmic}
\end{minipage}\\
\bottomrule
\end{tabular}
\end{minipage}
\end{table}

Different noise--demonstration pairs may pass through the same state
with different velocities. Given the state, time, and observation
condition, the MSE loss targets the conditional average
over velocities of endpoint pairs consistent with that input. The
following proposition shows that, within our finite-map family, this
conditional supervision and supervision using the marginal velocity have
the same minimizers.

\begin{proposition}[Minimization equivalence of CET and marginal Flow Matching]
\label{prop:cet}
Fix $h=1/K>0$ and a parameter domain $\Theta$. Under the training
distribution in Eq.~\eqref{eq:fm}, let $S=(k,X_{t_k},c)$ denote the
information available to the network, and define
$U=\mathbb E[V\mid S]$ and
$q_{\theta,h}=q_{\theta,h}(X_{t_k},t_k,c)$.
Assume that the distribution of $(S,V)$ is independent of $\theta$ and, for every
$\theta\in\Theta$, $V,q_{\theta,h}\in L^2$ and $q_{\theta,h}$ is
measurable with respect to $S$. Define
\begin{equation}
 \mathcal L_{\mathrm{FM},h}(\theta)
 =\frac{1}{d}\mathbb E\|q_{\theta,h}-U\|^2,\qquad
 C=\frac{1}{d}\mathbb E\|V-U\|^2.
 \label{eq:marginal-risk}
\end{equation}
Then, for every $\theta\in\Theta$,
\begin{equation}
 \mathcal L_{\mathrm{CET},h}(\theta)
 =C+\mathcal L_{\mathrm{FM},h}(\theta),
 \label{eq:regression}
\end{equation}
where $C$ is independent of $\theta$. For the marginal Euler target
$T_h(S)=X_{t_k}+hU$, also define
\begin{equation}
 \mathcal L_{\mathrm{E},h}(\theta)
 =\frac{1}{d}\mathbb E
 \|F_{\theta,h}(X_{t_k},t_k,c)-T_h(S)\|^2
 =h^2\mathcal L_{\mathrm{FM},h}(\theta).
 \label{eq:endpoint-risk}
\end{equation}
Consequently,
\begin{equation}
 \operatorname*{argmin}_{\theta\in\Theta}\mathcal L_{\mathrm{CET},h}
 =\operatorname*{argmin}_{\theta\in\Theta}\mathcal L_{\mathrm{FM},h}
 =\operatorname*{argmin}_{\theta\in\Theta}\mathcal L_{\mathrm{E},h}.
 \label{eq:cet-argmin}
\end{equation}
\end{proposition}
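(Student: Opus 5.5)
The plan is to use the standard orthogonality (Pythagorean) decomposition of the $L^2$ risk around a conditional expectation. First we check that $U=\mathbb E[V\mid S]$ is well defined and square-integrable: since $V\in L^2$, conditional Jensen gives $\mathbb E\|U\|^2\le\mathbb E\|V\|^2<\infty$. Because $t_k=kh$ is a function of $k$, the quantity $q_{\theta,h}=q_{\theta,h}(X_{t_k},t_k,c)$ is a function of $S$, and by assumption it is $S$-measurable and in $L^2$. We then write
\[
 q_{\theta,h}-V=(q_{\theta,h}-U)+(U-V)
\]
and expand the squared norm. This gives $\mathbb E\|q_{\theta,h}-V\|^2=\mathbb E\|q_{\theta,h}-U\|^2+\mathbb E\|U-V\|^2+2\,\mathbb E\langle q_{\theta,h}-U,\,U-V\rangle$. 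All three terms are finite by Cauchy--Schwarz, since every vector involved lies in $L^2$.

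The key step, and the only one that needs care, is to show the cross term vanishes. We condition on $S$ via the tower property: $\mathbb E\langle q_{\theta,h}-U,U-V\rangle=\mathbb E\bigl[\langle q_{\theta,h}-U,\ \mathbb E[U-V\mid S]\rangle\bigr]$. This "pull-out" is legitimate because $q_{\theta,h}-U$ is $S$-measurable and the product is integrable. The inner conditional expectation is $U-U=0$, so the cross term is zero. Dividing by $d$ gives Eq.~\eqref{eq:regression}. The constant $C$ depends only on the law of $(S,V)$, which by assumption does not depend on $\theta$, so $C$ is the same for every $\theta\in\Theta$.

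For Eq.~\eqref{eq:endpoint-risk} we use the definition in Eq.~\eqref{eq:quotient}, namely $F_{\theta,h}(X_{t_k},t_k,c)=X_{t_k}+h\,q_{\theta,h}$. Hence $F_{\theta,h}-T_h(S)=h(q_{\theta,h}-U)$ pointwise, and taking squared norms and expectations gives $\mathcal L_{\mathrm{E},h}=h^2\mathcal L_{\mathrm{FM},h}$.

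Finally, Eq.~\eqref{eq:cet-argmin} follows because the three objectives differ only by a $\theta$-independent additive constant $C$ and a positive multiplicative constant $h^2$. Both operations are strictly monotone and leave the argmin sets unchanged, including when those sets are empty. I expect no real obstacle beyond correctly justifying the conditioning step in the cross term and noting that $t_k$ is encoded in $S$ through $k$.
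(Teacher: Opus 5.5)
Your proposal is correct and follows the paper's proof essentially step for step. Both expand the CET risk around $U=\mathbb E[V\mid S]$ and eliminate the cross term using the $S$-measurability of $q_{\theta,h}-U$ and $\mathbb E[V-U\mid S]=0$. Both then use the pointwise identity $F_{\theta,h}-T_h(S)=h(q_{\theta,h}-U)$ and conclude that the argmin sets are unchanged by a finite $\theta$-independent shift and a positive rescaling.
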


Similar to Conditional Flow Matching, CET uses velocities from directly
sampled endpoint pairs without explicitly computing the marginal
velocity $U$~\citep{lipman2024guide}. See Appendix~\ref{app:cet-proof}
for the proof.

\subsubsection{SAC-Style Policy Learning}
\label{sec:sac}
We follow an offline-to-online actor--critic schedule based on
SAC-Flow~\citep{yu2025sacflow}. Each offline iteration performs two successive
updates on the same batch. The first trains the actor with CET and
the critics using the SAC critic loss. The second applies a
SAC-style actor--critic update to the same batch. During online
learning, the policy executes sampled action chunks and stores the
resulting transitions in replay. After warm-up, it trains the actor
and critics on replay windows using the objectives below. CET is
used only offline; online actor updates use the SAC-style objective.

Let $a_B$ denote the action chunk in a replay window, drawn from
either the offline dataset or online interaction. The SAC-style actor
objective is
\begin{equation}
 \mathcal L_{\mathrm{actor}}=
 \mathbb E_{c\sim\mathcal B,\,a\sim\pi_\theta(\cdot\mid c)}
   [\alpha\log\pi_\theta(a\mid c)-\bar Q_\psi(c,a)]
 +\lambda\mathbb E_{(c,a_B)\sim\mathcal B,\,a\sim\pi_\theta(\cdot\mid c)}
   \frac{1}{d}\|a-a_B\|_2^2,
 \label{eq:actor}
\end{equation}
where $\bar Q_\psi$ averages the two critics $Q_{\psi_1}$ and
$Q_{\psi_2}$, and $\lambda$ weights the action-matching penalty.

These objectives act on different quantities. CET uses demonstrations
to supervise local transport at noisy intermediate states. The Q term
evaluates the complete sampled chunk in terms of task reward, while
the entropy term uses its log probability. Action matching penalizes
deviations from replay actions directly in action space, constraining
reward-driven updates even when no CET path is sampled. During online
learning, the reference actions change as the policy collects new
replay data. Algorithm~\ref{alg:forward-sample} computes $a$ and its
log probability $\log\pi_\theta(a\mid c)$ together, allowing policy
gradients to backpropagate.

For a complete predicted chunk executed from environment time $\tau$
with $T_p=T_a$,
define its discounted return as
$R_\tau^{(T_a)}=\sum_{i=0}^{T_a-1}\gamma^i r_{\tau+i}$, where
$r_{\tau+i}$ is the reward at each step and $\gamma\in[0,1)$ is the
discount factor. Let $m_\tau^{(T_a)}$ indicate whether bootstrapping
is permitted. Sampling the next chunk
$a'\sim\pi_\theta(\cdot\mid c_{\tau+T_a})$ gives the soft target
\begin{equation}
 y_\tau=R_\tau^{(T_a)}+\gamma^{T_a}m_\tau^{(T_a)}
 \left[\bar Q_{\mathrm{target}}(c_{\tau+T_a},a')
       -\alpha\log\pi_\theta(a'\mid c_{\tau+T_a})\right],
 \quad a'\sim\pi_\theta(\cdot\mid c_{\tau+T_a}).
 \label{eq:target}
\end{equation}
Here $\bar Q_{\mathrm{target}}$ averages the two target critics. Each
critic minimizes the squared Bellman residual between its value for
the replayed chunk and $y_\tau$. Both the actor objective and the critic
target use the log probability of the complete executed chunk.
Appendix~\ref{app:protocol} specifies the settings of the replay masks,
optimizers, target-network updates, and initial entropy temperature.

\section{Evaluation}
\label{sec:experiments}

Our experiments address three questions: can JTT learn effective
imitation policies under its structural constraints; can the same
policy support offline-to-online actor--critic learning; and how
do finite-step supervision and learned coordinate frames affect task
performance? Lift evaluates imitation learning and the two design
choices, while Cube-Triple evaluates online learning and compares
offline training configurations.

\subsection{Experimental Setup}
\label{sec:exp-protocol}
\paragraph{Benchmarks.}
For Robomimic Lift~\citep{mandlekar2022robomimic}, we use the
low-dimensional Multi-Human demonstration dataset, comprising 27,582
training windows. The robot must grasp and lift a cube based on a
19-dimensional state observation, with each action chunk containing
four seven-dimensional end-effector and gripper commands ($d=28$).
OGBench Cube-Triple Task~4~\citep{park2025ogbench} uses the
\texttt{cube-triple-\allowbreak play-v0} dataset and requires the
robot to move three cubes to a target configuration. The policy
maps a 46-dimensional state observation to a chunk of five commands,
each with five control coordinates. Both tasks use $T_o=1$ and
$T_p=T_a$: the robot executes the full predicted chunk before replanning.

\paragraph{Policies and training.}
PrT-Flow applies four orthogonal JTT heads per update, starting from
a Gaussian base distribution and using a final tanh transformation
to map outputs to the normalized action bounds. On Lift, we use
16 sampling steps with CET and evaluate the exponential moving
average (EMA) weights after 300 training epochs. On Cube-Triple,
we use four sampling steps and train the randomly initialized policy
and critics for 30,000 paired offline iterations, followed by
600,000 primitive environment steps of online learning. The main
configuration uses finite-step CET at the four sampling grid times
and an online action-matching weight of $\lambda=100$.
Appendix~\ref{app:protocol} gives the hyperparameters and action
preprocessing details.

\paragraph{Baselines and metrics.}
The Lift reference is a denoising diffusion probabilistic model
(DDPM) policy trained on the same data, with the same action horizons
and evaluation episodes. On Cube-Triple, we compare against Flow-G,
the gated velocity policy from the SAC-Flow
family~\citep{yu2025sacflow}, with sampling noise standard deviations
$\sigma=0.01$ and $0.05$. All three runs use 30k paired offline
iterations and 600k online primitive steps. Flow-G samples with a
gated MLP and added noise, whereas PrT-Flow uses deterministic JTT
updates; all three use the same raw demonstration corpus. Success
rate is the fraction of episodes satisfying the environment's success
criterion, and mean return is the average undiscounted episode reward.

Each evaluation comprises 50 episodes, with a maximum of 500 steps
per episode on Lift and 1,000 on Cube-Triple. We report Lift results
after 300 training epochs and evaluate Cube-Triple every 10k online
steps, reporting results at 600k steps.

\subsection{Main Results}
\label{sec:main-results}
\paragraph{Imitation learning on Lift.}
\label{sec:lift-results}
PrT-Flow achieves a 96\% success rate, matching the DDPM baseline when
both policies are evaluated after 300 training epochs (Table~\ref{tab:lift-il}).
This result shows that causal, invertible JTT updates can support
effective closed-loop manipulation despite their structural constraints.

\begin{table}[H]
\centering
\caption{Lift imitation learning after 300 epochs. Both policies are
evaluated on the same set of 50 episodes.}
\label{tab:lift-il}
\begin{tabular}{@{}lcc@{}}\toprule
Policy & Evaluation epoch & Success rate\\\midrule
DDPM reference & 300 & 96\%\\
PrT-Flow (JTT, CET) & 300 & 96\%\\\bottomrule
\end{tabular}
\end{table}

\paragraph{Offline-to-online learning on Cube-Triple.}
\label{sec:cube-results}
At 600k online steps, PrT-Flow succeeds in 47 of 50 development
evaluation episodes, compared with 44 for Flow-G ($\sigma=0.01$)
and 28 for Flow-G ($\sigma=0.05$). Their mean returns are $-578.28$,
$-613.80$, and $-1561.86$, respectively (Table~\ref{tab:cube-main}).
Figure~\ref{fig:cube-learning-curves} shows the full learning
trajectories on this shared development set. In separate final
evaluations after training, PrT-Flow succeeds in 48 of 50 episodes,
compared with 46 for Flow-G ($\sigma=0.01$) and 25 for Flow-G
($\sigma=0.05$).

Although Flow-G ($\sigma=0.01$) achieves a slightly higher final mean
return ($-475.48$ versus $-530.80$), PrT-Flow improves more rapidly
during online training (Figure~\ref{fig:cube-learning-curves}). At
half the online training budget (300k primitive environment steps),
PrT-Flow achieves a development success rate of 68\% and a mean return
of $-1158.22$, compared with 38\% and $-2035.18$ for Flow-G
($\sigma=0.01$). These learning curves indicate an advantage in
online sample efficiency for PrT-Flow in this comparison.
The resulting policy achieves effective online control using JTT
and the action probabilities computed during its forward pass.
Appendix~\ref{app:verification} reports numerical checks of the
trained policy's forward log probabilities and their parameter gradients.

\begin{table}[H]
\centering\small
\caption{Cube-Triple Task~4 after 30k paired offline iterations and
600k online primitive environment steps. Development results use
the same set of 50 evaluation episodes for all policies. Final
results come from separate 50-episode evaluations after training.}
\label{tab:cube-main}
\begin{tabularx}{\textwidth}{@{}Xrrrr@{}}\toprule
Policy & Dev. SR & Dev. return & Final SR & Final return\\\midrule
Flow-G, $\sigma=0.01$ & 88\% & $-613.80$ & 92\% & $-475.48$\\
Flow-G, $\sigma=0.05$ & 56\% & $-1561.86$ & 50\% & $-1635.42$\\
PrT-Flow, CET & 94\% & $-578.28$ & 96\% & $-530.80$\\\bottomrule
\end{tabularx}
\end{table}

\begin{figure}[t]
\centering
\includegraphics[width=\textwidth]{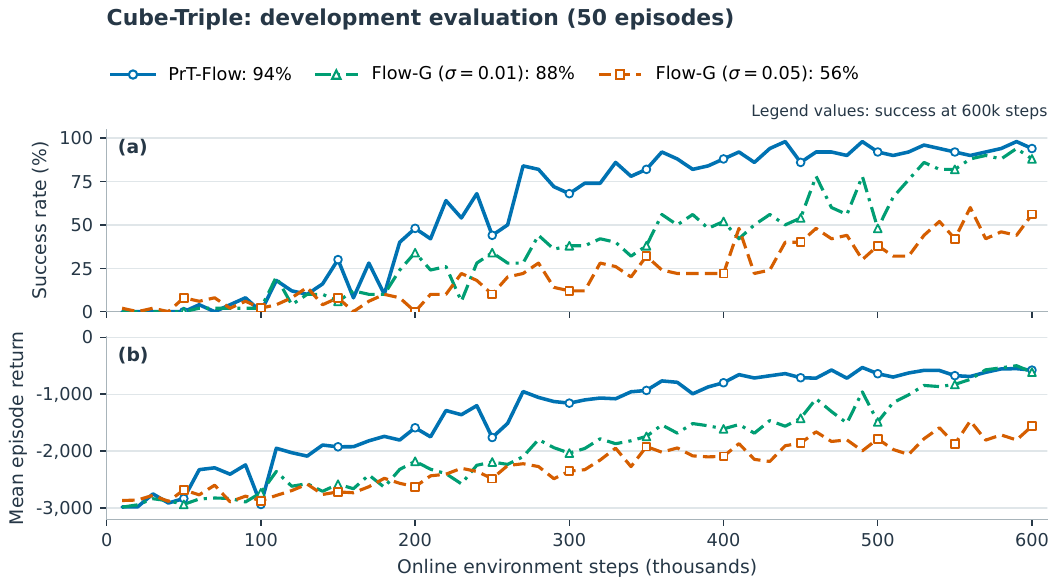}
\caption{Learning curves on Cube-Triple for PrT-Flow and Flow-G
($\sigma=0.01$ and $0.05$). Panels \textbf{A} and \textbf{B} show
success rate and mean return on the shared set of 50 development
episodes. Lines connect evaluations without smoothing. At 600k
online steps, success rates are 94\% for PrT-Flow, 88\% for Flow-G
($\sigma=0.01$), and 56\% for Flow-G ($\sigma=0.05$). Values in the
legend correspond to this endpoint, rather than each run's best
checkpoint.}
\label{fig:cube-learning-curves}
\end{figure}

\subsection{Ablations and Offline Training Configurations}
\label{sec:ablation}
We first examine two design choices using pairs of Lift runs with
matched training settings: the supervised quantity and the orthogonal
coordinate frames.
Both comparisons use 16 sampling steps and evaluate EMA policies
after 300 training epochs (Table~\ref{tab:lift-ablations}).
These are separate runs with a 300-epoch cosine schedule; the
PrT-Flow benchmark in Table~\ref{tab:lift-il} uses the epoch-300
checkpoint of a 3,000-epoch schedule.

\begin{table}[H]
\centering\small
\caption{Design ablations on Lift with $K=16$ after 300 training epochs.
Each block reports a separate pair of runs with matched training settings,
with 50 evaluation episodes per model. Within each pair, only the design choice listed
in the first column changes.}
\label{tab:lift-ablations}
\begin{tabularx}{\textwidth}{@{}lXr@{}}\toprule
Design choice & Configuration & Successful episodes\\\midrule
Supervised quantity & CET finite-step quotient & \textbf{49/50}\\
 & Zero-step generator & 33/50\\\midrule
Coordinate frame & Learned orthogonal frames & \textbf{49/50}\\
 & Fixed initial random orthogonal frames & 33/50\\\bottomrule
\end{tabularx}
\end{table}

\paragraph{Finite-step supervision.}
The two policies share the network architecture, initialization,
training time grid, and demonstration endpoints in generative coordinates.
We change only whether the training objective regresses the finite-step
quotient $(F_{\theta,h}-\mathrm{id})/h$ at $h=1/16$ or the zero-step
generator $\partial_hF_{\theta,h}|_{h=0}$ against the conditional path
velocity. CET succeeds in 49 of 50 episodes, compared with 33 for
generator supervision. This result supports supervising the finite
update used during sampling.

\paragraph{Learned coordinate frames.}
In a separate matched pair, both policies use CET and start from the
same initial weights and random orthogonal frames. One policy learns
the frames jointly with the rest of JTT; the other keeps them fixed
at initialization. The learned-frame policy succeeds in 49 of 50
episodes, compared with 33 for fixed frames. Thus, learning the
coordinate system improves task performance in this comparison,
supporting the use of trainable orthogonal transformations alongside
the causal scalar maps.

\paragraph{Offline training configurations on Cube-Triple.}
We also train the same JTT architecture with an alternative offline
velocity objective, regressing the zero-step generator
$\partial_hF_{\theta,h}|_{h=0}$ against the conditional path velocity
at continuously sampled times, with endpoints in normalized action
coordinates. CET instead supervises the finite displacement divided
by $h$ at $h=1/4$, using the four sampling times and demonstration
endpoints transformed as in Section~\ref{sec:policy-coordinates}.
Both configurations retain the same paired offline actor--critic
schedule and SAC-style online learning. They thus differ in how
the local maps are supervised offline, while preserving invertible
sampling and tractable probability density evaluation.

\begin{table}[H]
\centering\small
\caption{PrT-Flow with two offline training configurations, evaluated
after 30k paired offline iterations and 600k online steps. Both use
the same development and final evaluation sets, with 50 episodes
in each set.}
\label{tab:cube-joint}
\begin{tabularx}{\textwidth}{@{}Xlrrrr@{}}\toprule
Supervised quantity / time sampling & \shortstack[l]{Endpoint\\coordinates} & Dev. SR & Dev. return & Final SR & Final return\\\midrule
Generator velocity / continuous & Action & 88\% & $-806.02$ & 94\% & $-619.72$\\
CET quotient / grid & Generative space & 94\% & $-578.28$ & 96\% & $-530.80$\\\bottomrule
\end{tabularx}
\end{table}

CET achieves higher development success at 40 of the 42 checkpoints
from 190k to 600k steps, with a final success rate of 96\% versus
94\% for velocity supervision. Although the final difference is
one episode, the checkpoint comparison indicates different training
trajectories. Because the estimator, time sampling, and target
coordinates all change, this comparison evaluates the complete
offline configurations.

\subsection{Discussion}
\label{sec:discussion}
The analysis shows how PrT-Flow computes the action probabilities
defined by the finite-step sampling procedure used at deployment.
JTT provides analytic
Jacobian corrections through invertible finite maps, while CET
supervises finite displacements at the deployment step size and
relates this objective to marginal Flow Matching regression over
the same map family. The experiments show that the invertible and
causal structure needed for this computation still permits effective
robot behavior: PrT-Flow achieves high imitation success on Lift
and supports offline-to-online actor--critic learning on Cube-Triple
with the same action generation and probability evaluation procedure.
Thus, one policy can combine multi-step Flow Matching generation
with an explicit probability density for the actions it executes.

PrT-Flow and methods such as SAC-Flow differ in how they obtain
policy probabilities. SAC-Flow introduces random perturbations
into finite-step sampling to define probabilities over sampling
paths. PrT-Flow applies deterministic, invertible maps after sampling
the initial Gaussian noise, computing the probability density of the
final action chunk directly using the change-of-variables formula. In our Flow-G
experiments (Figure~\ref{fig:cube-learning-curves}), changing only
the sampling noise standard deviation from $\sigma=0.01$ to $0.05$
produces markedly different learning trajectories and fixed-budget
performance, indicating that the noise scale itself can substantially
affect training. PrT-Flow does not need stepwise noise to compute
action probabilities, so it requires no additional sampling noise
scale $\sigma$ for this purpose. Its high online control performance
on Cube-Triple shows that explicit finite-step action densities can
support effective policy learning without a stochastic path-based
probability construction.

This finite-step probability computation also provides a basis for
extensions beyond SAC's action-entropy objective. Inverting the
transformations allows the same policy to re-evaluate a given
action under updated parameters, making it naturally applicable to
likelihood ratios, KL regularization, importance weighting, and
replay-action rescoring. Extensions to longer prediction horizons,
probability densities of executed action prefixes, visually conditioned
policies, and real robots would help establish the scope of
probability-tractable Flow Matching as a general generative policy
representation.

\section{Conclusion}

We introduced PrT-Flow to make the probability density of a robot's
executed action chunk directly computable while preserving multimodal multi-step
Flow Matching generation. The central idea is to make the finite
sampling procedure itself invertible with tractable Jacobian determinants,
so that action probabilities can be computed directly from the sampling steps. JTT combines causal monotone splines and orthogonal
coordinate transformations to accumulate density changes in the
same forward pass that generates an action; CET supervises finite
displacements at the deployment step size, aligning training with
the sampling maps actually used. Proposition~\ref{prop:cet}
establishes that, at a fixed step size and within the same finite
map family, CET and marginal Flow Matching regression have the
same set of minimizing parameters.

The experiments show that the structural constraints needed to
track probabilities do not prevent effective robot learning:
PrT-Flow achieves 48 successes in 50 imitation-learning evaluation
episodes on Robomimic Lift and serves as the action generator in
offline-to-online SAC-style learning on OGBench Cube-Triple, also
reaching 48 successes in 50 final evaluation episodes. These results
provide evidence that a Flow Matching policy can retain multi-step
generation while explicitly evaluating the probability density of
the actions it executes. Constraining the finite sampling maps
allows a single policy to provide both capabilities.

PrT-Flow thus provides a broader basis for studying generative robot
policies: the same finite-step model generates actions from noise
and returns their explicit probability densities. Future work can
exploit this property across a wider range of policy optimization
methods. Finite-step generative policies with tractable probabilities
may thereby offer a general policy representation that connects
the expressive capacity of Flow Matching with likelihood-based
reinforcement learning.

\section*{AI Use Statement}

ChatGPT and OpenAI Codex were used to assist with Python code implementation, Lean 4 formalization, English-language editing, and LaTeX formatting during the preparation of this manuscript.

\bibliographystyle{plainnat}
\bibliography{references}

\newpage
\appendix
\small
\section{Experimental Details}
\label{app:protocol}
\begin{table}[H]
\centering\small
\caption{Optimization settings for the main PrT-Flow experiments.
Cube-Triple uses Adam for the actor, critics, and entropy temperature.}
\label{tab:hyperparameters}
\begin{tabularx}{\textwidth}{@{}p{0.45\textwidth}X X@{}}
\toprule
Parameter & Lift & Cube-Triple\\
\midrule
Batch size & 256 & 256\\
Optimizer & AdamW & Adam\\
Learning rate & $10^{-4}$ & $3\times10^{-4}$\\
Weight decay & $10^{-6}$ & ---\\
AdamW $(\beta_1,\beta_2)$ / $\epsilon$ & $(0.9,0.999)$ / $10^{-8}$ & ---\\
Cosine warm-up (optimizer steps) & 100 & ---\\
EMA power / maximum decay & $0.75$ / $0.995$ & ---\\
Critic hidden layers / width & --- & 4 / 512\\
Critic aggregation & --- & Mean of twin Q-values\\
Discount $\gamma$ / target coefficient $\tau_{\mathrm{target}}$ & --- & $0.99$ / $0.005$\\
Replay buffer-size option & --- & $2\times10^6$\\
Online warm-up (environment steps) / UTD & --- & 5,000 / 1\\
Initial entropy temperature / target entropy & --- & $0.2$ / 0\\
Online action-MSE weight & --- & 100\\
\bottomrule
\end{tabularx}
\end{table}

\paragraph{Preprocessing.}
Lift observations use per-coordinate min--max normalization from training
statistics to a nominal $[-1,1]$ range without clipping; $10\%$ of
demonstrations are reserved for validation. Cube-Triple's offline
actions are clipped with margin $10^{-5}$. After task preprocessing,
any remaining action coordinates at $\pm1$ are replaced by
$\pm(1-10^{-6})$, leaving interior values unchanged, to obtain
$\tilde a_D$; CET then uses $u_D=\operatorname{atanh}\tilde a_D$.

\paragraph{Training details.}
The main Lift result is the epoch-300 checkpoint of a 3,000-epoch cosine
schedule, whereas the matched ablations use a full 300-epoch schedule;
both use the warm-up and EMA settings in Table~\ref{tab:hyperparameters}.
For incomplete replay windows, invalid positions are masked, but the loss
is divided by all entries in the batch--time--coordinate array, including
masked entries. The alternative generator-supervised configuration in
Table~\ref{tab:cube-joint} uses continuous times and the latent endpoint
$a_D$, corresponding to $\tanh(a_D)$ in action space, whereas CET uses
the inverse-tanh endpoint.

\subsection{Flow-G Baseline}
\label{app:flowg}
The two Flow-G configurations differ only in sampling noise
($\sigma=0.01$ or $0.05$); both use transition noise $0.05$ in SAC actor
training. They retain the noise reuse and log-probability calculation of
the authors' released offline-to-online implementation.\footnote{\href{https://github.com/Elessar123/SAC-FLOW/blob/3df8b9dc33de5160851a82322bc6c9005886f3ce/offline-to-online/agents/acfql_gru_ablation_online_sac.py\#L409-L513}{Official Flow-G sampling and log-probability routines}, revision \texttt{3df8b9d}.}
SAC-Flow's Gaussian construction defines a joint path probability density,
with action probability density obtained by integrating out intermediate
states~\citep[Appendix~A.2, Eqs.~(15)--(16)]{yu2025sacflow}.
The released implementation instead reuses fixed-key perturbations across
Euler steps while summing Gaussian transition log terms, the base log
probability, and the tanh correction. This sum serves as its entropy
surrogate rather than the exact log probability of the sampled path or
action. All three policies share the development evaluation panel and
undergo separate 50-episode final evaluations.

\section{Numerical Verification}
\label{app:verification}

\paragraph{Scalar-map checks.}
We test the policy's scalar RQS configuration (eight bins, $B=4$, rate
bound $16$) at $h=1/16$ and $h=1/4$. For each of five random seeds, we
draw independent sets of 4,096 inputs and outputs, including 128 points
in the identity tails in each set, and independent standard-normal raw
parameters. With parameters held fixed, we compare analytic and
autodifferentiated log-derivatives, check both inverse compositions, and
compare forward and inverse log probabilities under a standard Gaussian
base. The same draws are evaluated on CPU in FP32 and FP64, giving
40,960 checks per direction and precision.
Table~\ref{tab:scalar-consistency} reports the maximum absolute
discrepancies. All evaluated derivatives are positive; the zero-step
and identity-tail checks also pass.

\begin{table}[H]
\centering\small
\caption{Numerical consistency of scalar RQS maps. Here $f$ denotes a
map with fixed parameters and step size. Each entry is the maximum
absolute discrepancy across the five seeds and two step sizes; the
last two rows also take the maximum over both directions.}
\label{tab:scalar-consistency}
\begin{tabularx}{\textwidth}{@{}Xrr@{}}\toprule
Comparison & FP32 & FP64\\\midrule
$f^{-1}(f(x))$ vs.\ $x$
 & $9.54\times10^{-7}$ & $1.33\times10^{-15}$\\
$f(f^{-1}(y))$ vs.\ $y$
 & $7.15\times10^{-7}$ & $1.78\times10^{-15}$\\
Analytic vs.\ autodifferentiated log-derivative
 & $8.20\times10^{-7}$ & $1.55\times10^{-15}$\\
Forward vs.\ inverse log probability
 & $4.53\times10^{-6}$ & $6.22\times10^{-15}$\\\bottomrule
\end{tabularx}
\end{table}

\paragraph{Trained-policy checks.}
We check probability density evaluation on the main Cube-Triple CET
policy at 600k online steps, which generates 25-dimensional action
chunks through four finite updates. On the CPU, we compare FP32 and
FP64 forward log probabilities using the same parameters, 32 real
dataset states, and fixed Gaussian noise. For the concatenated
parameter gradient $g$ of the mean log probability, we report
$\|g_{32}-g_{64}\|_2/\|g_{64}\|_2$.

For three of these sampling trajectories, we also compare each
update's analytic log-determinant with that of its FP64
autodifferentiated Jacobian, giving 12 local comparisons.
Table~\ref{tab:numerical-consistency} shows close numerical agreement
for both the local determinants and the forward probability computations.

\begin{table}[H]
\centering\small
\caption{Numerical consistency of the main Cube-Triple CET policy
at 600k online steps. FP32/FP64 comparisons use 32 real states with
fixed noise; local Jacobian checks use 12 updates across three
four-step sampling trajectories.}
\label{tab:numerical-consistency}
\begin{tabularx}{\textwidth}{@{}Xlr@{}}\toprule
Comparison & Metric & Discrepancy\\\midrule
Analytic vs. autodifferentiated local log-determinant
 & Maximum absolute difference & $\leq 9.2\times10^{-12}$\\
FP32 vs. FP64 forward log probability
 & Maximum absolute difference & $1.20\times10^{-4}$\\
FP32 vs. FP64 parameter gradient of mean log probability
 & Relative $L_2$ difference & $9.77\times10^{-7}$\\\bottomrule
\end{tabularx}
\end{table}

\section{Mathematical Guarantees}
\label{app:theory}

This appendix proves the inversion and Jacobian identities for
computing the policy's probability density and the minimization
equivalence underlying CET. It also distinguishes regression at a
fixed step size from zero-step generator training.
Propositions~\ref{prop:causal} and~\ref{prop:cet} have both been
formally verified in Lean~4.

\subsection{Causal Inversion and Scalar RQS Derivatives}
\label{app:causal-proof}

\paragraph{RQS parameterization.}
Each head learns an unconstrained matrix $A$ and sets
$Q=\exp(A-A^\top)\in\mathrm{SO}(d)$. Using the step-scaled offsets
$\xi_h=h\,r(b)$ from Eq.~\eqref{eq:parameter-rate}, the widths and
heights of $n=n_{\mathrm{bin}}$ bins on $[-B,B]$ are
\begin{equation}
\begin{aligned}
 w_j&=\varepsilon_w+(2B-n\varepsilon_w)\operatorname{softmax}(\xi_h^w)_j,\\
 q_j&=\varepsilon_h+(2B-n\varepsilon_h)\operatorname{softmax}(\xi_h^h)_j,
 \qquad j=1,\ldots,n.
\end{aligned}
\label{eq:rqs-parameters}
\end{equation}
For $j=0,\ldots,n$, the knot slopes are
\begin{equation}
 m_j=\exp\!\left\{\operatorname{clip}\!\left[
 \log\!\left(\varepsilon_d+\operatorname{softplus}\!\left(
 \xi_{h,j}^r+\operatorname{softplus}^{-1}(1-\varepsilon_d)
 \right)\right),-L,L\right]\right\}.
\label{eq:rqs-slopes}
\end{equation}
Here $B>0$, $0<\varepsilon_w,\varepsilon_h<2B/n$,
$0<\varepsilon_d<1$, and $L\geq0$. Cumulative widths and heights
define knots with endpoints $\pm B$; we use the standard RQS
formula~\citep{durkan2019neural} within bins and identity tails.
At $h=0$, $w_j=q_j=2B/n$ and $m_j=1$, so the scalar map and its
orthogonal compositions are the identity.

\begin{proof}[Proof of Proposition~\ref{prop:causal}]

Given an output $w$, invert the first scalar map to recover $y_1$.
Once $y_{<i}$ is known, the conditioner fixes the parameters of the
$i$th scalar map, whose inverse uniquely recovers $y_i$. By induction,
every $w$ has exactly one input, proving surjectivity and injectivity.

At any differentiable point, changing $y_j$ for $j>i$ leaves the
$i$th output unchanged, so the Jacobian of $S$ is lower triangular.
Its diagonal entries are the scalar derivatives with the prefix
held fixed, giving $\det J_S(y)=\prod_i\partial_{y_i}s_i$.
For $H=Q^\top S Q$, the chain rule yields
$J_H(x)=Q^\top J_S(Qx)Q$. Orthogonality of $Q$ then gives
$|\det J_H(x)|=|\det J_S(Qx)|$.

The scalar derivatives are positive, so taking logarithms is valid.
Composition adds the log-determinants of successive heads and steps,
each evaluated at its own input. The proof requires neither a
triangular Jacobian after rotation nor an exact ODE flow.
\end{proof}

For a scalar bin of width $w>0$, height $q>0$, and
endpoint slopes $d_0,d_1>0$, let $\sigma=q/w$ and
$\tau=(x-x_0)/w\in[0,1]$. The RQS derivative is
\begin{equation}
 S'(x)=
 \frac{\sigma^2[d_1\tau^2+2\sigma\tau(1-\tau)+d_0(1-\tau)^2]}
 {[\sigma+(d_0+d_1-2\sigma)\tau(1-\tau)]^2}>0.
 \label{eq:rqs-derivative}
\end{equation}

Before squaring, the denominator is
$\sigma[\tau^2+(1-\tau)^2]+(d_0+d_1)\tau(1-\tau)$, which is positive
throughout the bin, as is the numerator. Each bin thus maps its input
interval continuously and strictly increasingly onto the prescribed
output interval and has a unique within-bin inverse. Equal widths
and heights, unit slopes, and coincident input and output knots give
the identity. Adjacent bins share knots and derivatives, and identity
tails preserve bijectivity on the real line. The learned endpoint
slopes may differ from the tails' unit slope, so we use
almost-everywhere derivatives without assuming global $C^1$ regularity.

\subsection{CET Minimizers and Comparison with Generator Training}
\label{app:cet-proof}
\begin{proof}[Proof of Proposition~\ref{prop:cet}]

Let $q=q_{\theta,h}(X_{t_k},t_k,c)$ and $U=\mathbb E[V\mid S]$.
Conditional expectation preserves square integrability, and the
Cauchy--Schwarz inequality ensures that all terms below are
integrable. Since $q-U$ is $S$-measurable and
$\mathbb E[V-U\mid S]=0$, we have

\[
 \mathbb E\langle q-U,V-U\rangle
 =\mathbb E\langle q-U,\mathbb E[V-U\mid S]\rangle=0.
\]

Expanding the squared norm of $q-V=(q-U)-(V-U)$ and dividing by
$d$ gives Eq.~\eqref{eq:regression}. The additive constant is the
same for every $\theta\in\Theta$. Subtracting it preserves every
pairwise loss comparison and hence the full sets of minimizers,
including when those sets are empty. Finally,

\[
 F_{\theta,h}(X_{t_k},t_k,c)-T_h(S)
 =h\bigl(q_{\theta,h}-U\bigr).
\]

Taking squared norms and expectations, then dividing by $d$, gives
Eq.~\eqref{eq:endpoint-risk}. The factor $h^2$ is positive and
independent of $\theta$, which proves Eq.~\eqref{eq:cet-argmin}.
\end{proof}

If $F_{\theta,0}(x,t,c)=x$ and $h\mapsto F_{\theta,h}(x,t,c)$ is
differentiable at zero, the definition of derivative gives
\begin{equation}
 F_{\theta,h}(x,t,c)=x+hv_\theta(x,t,c)+o(h),\qquad
 q_{\theta,h}(x,t,c)\longrightarrow v_\theta(x,t,c).
 \label{eq:generator-limit}
\end{equation}

For jointly differentiable heads that equal the identity at $h=0$,
the generator of their finite composition is the sum of their
generators. Conjugating a head by a fixed linear map conjugates
its generator in the same way. Neither statement establishes a
semigroup law or implies that the finite map is the exact flow of
its generator.

\paragraph{Fixed-step objectives differ.}

Consider the translation $F_{\theta,h}(x)=x+h\theta+h^2$, which is
invertible everywhere. Its generator is $v_\theta=\theta$, but its
finite-step quotient is $q_{\theta,h}=\theta+h$. For a zero target,
the finite-step risk is minimized at $\theta=-h$, whereas the
generator risk is minimized at $\theta=0$. This example shows that
invertibility and $F_{\theta,0}=\mathrm{id}$ alone do not justify
replacing the quotient used at deployment with the zero-step
derivative in CET.


\end{document}